\newcommand{\eps}{{\varepsilon}}										
\newcommand{\E}[2][{}]{\mathbb E_{#1}\left[#2\right]}					
\newcommand{\PP}[1]{\mathbb P\left[#1\right]}							
\newcommand{\ra}{\rightarrow}											
\newcommand{\la}{\leftarrow}											
\newcommand{\Z}{{\mathbb{Z}}}											
\newcommand{\R}{{\mathbb{R}}}											
\newcommand{\state}{s}													
\newcommand{\State}{S}													
\newcommand{\states}{\mathbb S}											
\newcommand{\action}{a}													
\newcommand{\Action}{A}													
\newcommand{\actions}{\mathbb A}										
\newcommand{\policy}{\pi}												
\newcommand{\policies}{\Pi}												
\newcommand{\transit}{p}												
\newcommand{\reward}{r}													
\newcommand{\Value}{v}													
\newcommand{\G}{\ensuremath{\mathbb{G}}}								
\newcommand{\K}{\ensuremath{\mathcal{K}}\xspace}						
\newcommand{\KL}{\ensuremath{\mathcal{KL}}\xspace}						
\newtheorem{dfn}{Definition}
\newtheorem{theorem}{Theorem}
\newtheorem{corollary}{Corollary}
\newcommand{\goaldist}[1][{}]{d_{\G#1}}	
\newcommand{\basepolicy}{\policy_{b}}
\newcommand{\altpolicy}{\policy_{a}}
\newcommand{\calfwpolicy}{\policy_{t}}
\newcommand{\Valuebase}{\hat{\Value}^{\basepolicy}}
\newcommand{\Valuealt}{\hat{\Value}^{\altpolicy}}
\newcommand{\subscript}[2]{$#1 _ #2$}
\newcommand{\whiteqedsymbol}{\(\square\)}
{%
	\par\noindent\qquad\textit{Proof#1:}
}%
{%
	\hfill\whiteqedsymbol\par
}
\title{\LARGE \bf
Multi-CALF: A Policy Combination Approach with Statistical Guarantees
}
\author{Georgiy Malaniya$^{1}$, Anton Bolychev$^{1}$, Grigory Yaremenko$^{1}$, Anastasia Krasnaya$^{1}$, Pavel Osinenko$^{1}$ 
\thanks{$^{1}$Skolkovo Institute of Science and Technology}%
\thanks{Corresponding author: P. Osinenko, email: {\tt\scriptsize p.osinenko@gmail.com}.
	First two authors contributed equally.}%
}
\begin{document}

\maketitle
\thispagestyle{empty}
\pagestyle{empty}

\begin{abstract}
	We introduce Multi-CALF, an algorithm that intelligently combines reinforcement learning policies based on their relative value improvements. Our approach integrates a standard RL policy with a theoretically-backed alternative policy, inheriting formal stability guarantees while often achieving better performance than either policy individually. We prove that our combined policy converges to a specified goal set with known probability and provide precise bounds on maximum deviation and convergence time. Empirical validation on control tasks demonstrates enhanced performance while maintaining stability guarantees.
\end{abstract}

\section{INTRODUCTION}

Reinforcement learning (RL) has demonstrated remarkable effectiveness for solving complex control problems across diverse domains, from robotic manipulation \cite{Kumar2016Optimalcontrol,Borno2013TrajectoryOpti,Tassa2012Synthesisstabi,Akkaya2019Solvingrubiks} to games \cite{Silver2016Masteringgame,Silver2018generalreinfor,Vinyals2019Grandmasterlev}. However, standard algorithms like DDPG \cite{Lillicrap2016Continuouscont}, TD3 \cite{Fujimoto2018}, SAC \cite{Haarnoja2018}, and PPO \cite{Schulman2017ProximalPolicy} focus primarily on reward maximization without providing rigorous stability guarantees—a critical limitation for safety-critical applications.

Policy synthesis \cite{Rusu2016PolicyDistillation,Perkins2002LyapunovSafeRL}, which combines policies with complementary strengths, has emerged as a promising approach to address these limitations. These methods range from ensembling \cite{Yang2022PolicyEnsemble} to gating networks \cite{Fernandez2006PolicyReuse} and hierarchical compositions \cite{Sahni2017ComposeSkills}, providing improved performance or safety guarantees through intelligent policy combination.

Our work extends the Critic as Lyapunov Function (CALF) approach \cite{Osinenko2024CriticLyapunov}, which ensures stability through constrained optimization. Our key insight is that policies can be more efficiently combined through a mechanism that selects between them based on their relative critic values. This approach, called Multi-CALF, offers a more flexible selection mechanism than previous Lyapunov-based methods \cite{Perkins2002LyapunovSafeRL}.

Multi-CALF integrates a \emph{base policy} (trained via standard RL to optimize rewards) with an \emph{alternative policy} (equipped with statistical guarantees of reaching a specified goal set). Unlike approaches requiring extensive joint training \cite{Teh2017Distral} or distillation \cite{Rusu2016PolicyDistillation}, Multi-CALF provides a lightweight method for combining independently trained policies at decision time based on relative value improvements.

Our approach seamlessly integrates with existing RL libraries (e.g., stable-baselines3 \cite{stable-baselines3}, cleanrl \cite{huang2022cleanrl}) and speeds up deployment by avoiding additional optimization. Our main contributions are:
\begin{itemize}
	\item A novel policy fusion methodology with stability guarantees for critic-based RL agents
	\item Rigorous theoretical guarantees on goal reaching (Theorems \ref{thm:etagoalreaching} and \ref{thm:unform_goal_reaching})
	\item A lightweight implementation compatible with standard RL libraries
	\item Empirical validation demonstrating improved performance across multiple control tasks
\end{itemize}

Our implementation is available at {\footnotesize\url{https://github.com/aidagroup/multi-calf}}.

\section{PROBLEM STATEMENT}
\label{sec_problem}

In this section, we formalize the reinforcement learning setting and introduce the mathematical framework needed to precisely characterize our goal reaching guarantees.

\subsection{Markov Decision Process}

We work with the standard Markov decision process (MDP) formulation, which provides the foundation for studying the interaction between an agent and its environment:
\begin{equation}
	\label{eq:mdp_definition}
	\mathcal{M}
	\;=\;\bigl(
	\states,\;\actions,\;\transit_0,\;\transit,\;\reward
	\bigr).
\end{equation}

\noindent
Each component of this MDP serves a specific purpose:

\begin{enumerate}[leftmargin=1.5em]
	\item 
	      \(\states\) is the \emph{state space}, representing all possible configurations of the environment.
	      We assume it to be a finite-dimensional Banach space.
	      
	\item 
	      \(\actions\) is the \emph{action space}, representing all possible control inputs.
	      We assume it to be a compact topological space.
	      
	\item 
	      \(\transit_0(\state)\) is the \emph{initial state distribution}—the probability density function of the initial state \(\State_0\).
	      
	\item 
	      \(\transit(\state' \mid \state,\action)\) is the \emph{transition probability density function}.
	      It describes the probability density of transitioning to state \(\state'\) after taking action \(\action\) in state \(\state\).
	      To ensure theoretical guarantees, we assume there exists an upper semi-continuous function \(\bar{p} : \states \times \actions \to \mathbb{R}_{+}\) such that for any \(\state \in \states\) and \(\action \in \actions\), the next state \(\State'\) sampled from \(\transit(\bullet \mid \state, \action)\) satisfies \(\PP{\|\State'\| \leq \bar{p}(\state, \action)} = 1\). This assumption effectively bounds how far the system can move in a single step.
	      
	\item
	      \(\reward : \states \times \actions \;\to\;\mathbb{R}\) is the \emph{reward function}.
	      It quantifies the immediate reward received by taking action \(\action\) in state~\(\state\).
\end{enumerate}

\subsection{Policies and Value Functions}

We distinguish between two types of policies:

\begin{itemize}
    \item \textit{Stationary policies} \(\Pi_{\mathrm{stat}} = \{\pi : \states \to \Delta(\actions)\}\), where \(\Delta(\actions)\) is the set of all probability density functions over \(\actions\). For any state \(\state \in \states\), \(\policy(\action \mid \state)\) represents the probability density of selecting action \(\action\) in state \(\state\).
    
    \item \textit{Non-stationary policies} \(\policies_{\mathrm{nstat}} = \{\pi : \states \times \Z_{\geq0} \to \Delta(\actions)\}\), which explicitly incorporate time dependence. For any \(\state \in \states\) and \(t \in \mathbb{Z}_{\ge0}\), \(\policy(\action \mid \state, t)\) is the probability density of selecting action \(\action\) in state \(\state\) at time \(t\).
\end{itemize}

By definition, \(\Pi_{\mathrm{stat}} \subset \Pi_{\mathrm{nstat}}\), since any stationary policy can be viewed as a special case of a non-stationary policy that remains constant across time steps.

For a discount factor \(\gamma \in [0,1)\) and a policy \(\policy \in \policies_{\mathrm{nstat}}\), we define the \emph{value function} \(\Value^\policy : \states \to \mathbb{R}\) as:
\begin{equation}
	\label{eq:value_function}
	\Value^\policy(\state)
	=
	\E{
		\sum_{t=0}^{\infty}
		\gamma^t\,
		\reward(\State_t,\Action_t)
		\Bigm|
		\State_0 = \state,\,
		\Action_t \sim \policy(\bullet \mid \State_t)
	}.
\end{equation}

This value function represents the expected sum of discounted rewards when following policy \(\policy\) starting from state \(\state\). The standard RL objective is to find a policy that maximizes the expected sum of discounted rewards:
\begin{equation}
	\label{eq:rl_objective}
	\E{\Value^\policy(\State_0)} = \E{\sum_{t = 0}^{\infty}\gamma^t \reward(\State_t, \Action_t)} \ra \max_{\policy\in\policies_{\mathrm{nstat}}},
\end{equation}
where the trajectory \((\State_0,\Action_0,\State_1,\Action_1,\dots)\) is generated by
\(\State_0 \sim \transit_0(\bullet)\),
\(\Action_t \sim \policy(\bullet \mid \State_t, t)\), 
and
\(\State_{t+1} \sim \transit(\bullet \mid \State_t,\Action_t)\) for \(t \ge 0\). 

For a given policy \(\policy \in \policies_{\mathrm{nstat}}\) and an initial state \(\State_0 = \state_0\), we denote the state at time \(t\) by \(\State_t^{\policy}(\state_0)\).

\subsection{Goal Set and Goal Reaching Property}

While traditional RL focuses on reward maximization, many control problems—particularly in safety-critical applications—require driving the system to a specific target region. We formalize this requirement through the concept of a goal set.

\begin{dfn}[Goal Set]
	\label{dfn:goal_set}
	Let \(\G \subset \states\) be a \emph{goal set}—a compact neighborhood of the origin in the state space \(\states\). 
	This represents a target region where we want the system to eventually reach and remain. In practical control problems, this might represent a desired operating point (e.g., an inverted pendulum standing upright) plus a small margin of acceptable deviation.
\end{dfn}

To quantify how close a state \(\state\) is to the goal set \(\G\), we define the distance-to-goal function:
$
	\goaldist(\state)
	:=
	\displaystyle\inf_{\state' \in \G} \,\|\state - \state'\|.
$
This measures the minimum distance from state \(\state\) to any point in the goal set. By definition, \(\goaldist(\state)=0\) if and only if \(\state \in \G\).

Next, we introduce two key properties that characterize policies capable of driving the system to the goal set with high probability. These properties will be central to our stability guarantees.

\begin{dfn}[$\eps$-Improbable Goal Reaching]
	\label{dfn:eta_improbable}
	Let \(\eps \in [0,1)\) be a small probability value.
	A policy \(\policy \in \policies_{\mathrm{nstat}}\) satisfies the $\eps\)-\emph{improbable goal reaching property} if, for all initial states \(\state_0 \in \states\):
	\[
		\label{eqn_introstab}
		\PP{
			\goaldist(\State_t^{\policy}(\state_0))
			\,\xrightarrow{t \to \infty}\,
			0
		}
		\ge
		1 - \eps.
	\]
	
	In other words, the system reaches the goal set in the limit with probability at least \(1-\eps\). This property guarantees asymptotic convergence but doesn't specify how quickly the goal is reached.
\end{dfn}

For more precise control over convergence rate, we introduce a stronger property:

\begin{dfn}[Uniform $\eps$-Improbable Goal Reaching]
	\label{dfn:uniform_eta_improbable}
	Let \(\eps \in [0,1)\).
	A policy \(\policy \in \policies_{\mathrm{nstat}}\) satisfies the \emph{uniform} $\eps\)-\emph{improbable goal reaching property} if there exists a function $\beta \in \KL$ (a class of functions decreasing to zero in both arguments) such that for all initial states \(\state_0 \in \states\):
	\[
		\label{eqn_introstab}
		\PP{
			\goaldist(\State_t^{\policy}(\state_0)) \leq \beta(\goaldist(\state_0), t) \text{ for all } t
		}
		\ge
		1 - \eps.
	\]
	
	This stronger property bounds the distance to the goal at each time step by a function that depends on both the initial distance and elapsed time. It allows us to quantify not just whether the system will reach the goal, but how quickly it will get there and with what maximum deviation.
\end{dfn}

\subsubsection*{Connection to Multi-CALF}

This problem formulation establishes the mathematical foundation for our Multi-CALF approach. The core idea is to combine:

\begin{itemize}
    \item A base policy \(\basepolicy\) that aims to maximize the reward objective \eqref{eq:rl_objective} but may lack formal guarantees
    \item An alternative policy \(\altpolicy\) that satisfies the goal reaching property (Definition \ref{dfn:eta_improbable} or \ref{dfn:uniform_eta_improbable})
\end{itemize}

Our Multi-CALF algorithm, detailed in the next section, adaptively switches between these policies to maintain the theoretical guarantees of \(\altpolicy\) while preserving the performance benefits of \(\basepolicy\).

\section{PROPOSED APPROACH}
\label{sec:approach}

\emph{Multi-CALF} offers a methodology for combining reinforcement learning policies with complementary strengths, inheriting stability guarantees while potentially improving overall performance.

At its core, our approach integrates two distinct policies:
\begin{itemize}
    \item A \emph{base policy} $\basepolicy$: Any policy trained using standard RL methods to maximize rewards. This policy may excel in certain regions of the state space but lacks formal stability guarantees.
    \item An \emph{alternative policy} $\altpolicy$: A policy that satisfies the $\eps$-improbable goal reaching property (Definition~\ref{dfn:eta_improbable}), ensuring theoretical convergence guarantees.
\end{itemize}

Multi-CALF leverages these policies' complementary strengths across different regions of the state space. By intelligently selecting between them based on their relative value improvements, we create a combined policy that maintains the theoretical guarantees of $\altpolicy$ while often achieving better performance, as proven in Theorems~\ref{thm:etagoalreaching} and~\ref{thm:unform_goal_reaching}.

\subsection{Overview and Key Insight}

The fundamental insight driving Multi-CALF is that policies often demonstrate complementary strengths across the state space. Rather than selecting one policy with fixed criteria, we introduce a dynamic mechanism that evaluates their relative performance advantages at each state.

In our formulation (Algorithm~\ref{alg:multicalf}), we combine:
\begin{itemize}
    \item A \emph{base policy} $\basepolicy \in \policies_{\mathrm{stat}}$ trained via standard RL to optimize the objective \eqref{eq:rl_objective}, with value estimate $\Valuebase$
    \item An \emph{alternative policy} \(\altpolicy\in\policies_{\mathrm{stat}}\) that satisfies the $\eps$-improbable goal reaching property (Definition~\ref{dfn:eta_improbable})
\end{itemize}

Our approach fuses these into $\calfwpolicy \in \policies_{\mathrm{nstat}}$, which inherits $\altpolicy$'s convergence guarantees while leveraging each policy's strengths in different regions.

\subsection{Action Selection Mechanism}

\subsubsection{Critic-Based Decision Rule}

The key innovation of Multi-CALF is its action selection strategy based on relative critic value improvements. At each time step $t$, we calculate improvements $\Delta_{\text{base}} = \Valuebase(\State_t) - \Value_{\text{base}}^\dagger$ and $\Delta_{\text{alt}} = \Valuealt(\State_t) - \Value_{\text{alt}}^\dagger$ from their reference values.

We construct an indicator function based on relative improvements:

$$\mathsf{Indicator} = 
\begin{cases}
  1 & \text{if } \frac{\Delta_{\text{base}}}{\Value_{\text{base}}^\dagger} > \frac{\Delta_{\text{alt}}}{\Value_{\text{alt}}^\dagger} \\
  0 & \text{otherwise}
\end{cases}$$

Action selection then proceeds as:
\begin{itemize}
	\item With probability $\rho_{t}(\State_t) \times \mathsf{Indicator}$, select action from the base policy: $\Action_t \sim \basepolicy(\bullet \mid \State_t)$.
	\item Otherwise, with probability $1-\rho_{t}(\State_t) \times \mathsf{Indicator}$, select action from alternative policy: $\Action_t \sim \altpolicy(\bullet \mid \State_t)$.
\end{itemize}

After selection, we update reference values when improvements exceed threshold $\nu > 0$:
\begin{itemize}
    \item If $\Delta_{\text{base}} \geq \nu$, update $\Value_{\text{base}}^\dagger := \Valuebase(\State_t)$
    \item If $\Delta_{\text{alt}} \geq \nu$, update $\Value_{\text{alt}}^\dagger := \Valuealt(\State_t)$
\end{itemize}

The acceptance probability $\rho_{t}(\state)$ must satisfy $\sum_{t=0}^{\infty} \sup_{\state \in \states} \rho_{t}(\state) < \infty$, ensuring the algorithm eventually commits to the alternative policy. We use geometrically decaying probability: $\rho_{t}(\state) = \lambda^t p_{\text{relax}}$ where $\lambda \in (0,1)$ and $p_{\text{relax}} \in [0,1]$.

\subsubsection{Intuition Behind the Mechanism}

Our mechanism identifies where each policy demonstrates strengths by tracking relative value improvements. The normalized ratios $\frac{\Delta_{\text{base}}}{\Value_{\text{base}}^\dagger}$ versus $\frac{\Delta_{\text{alt}}}{\Value_{\text{alt}}^\dagger}$ enable valid comparison even when policies operate at different value scales. When a policy shows stronger relative improvement, it suggests superior performance in the current region.

The acceptance probability $\rho_{t}(\State_t)$ functions as a temperature parameter, gradually favoring the alternative policy as time increases. This dual-purpose design maintains theoretical guarantees while allowing performance benefits on finite horizons. The parameters $p_{\text{relax}}$ and $\lambda$ offer control over this transition, with values near 1 prioritizing performance and smaller values emphasizing stability.

\subsection{Performance Improvements through Policy Combination}

The key innovation is normalizing policy improvements by their respective best-observed values: $\frac{\Delta_{\text{base}}}{\Value_{\text{base}}^\dagger}$ versus $\frac{\Delta_{\text{alt}}}{\Value_{\text{alt}}^\dagger}$. This normalization enables fair comparison between policies that might:
\begin{itemize}
    \item Operate at different value scales due to their distinct learning methodologies
    \item Excel in different regions of the state space
    \item Provide complementary advantages that can be synergistically combined
\end{itemize}

The theoretical guarantees of our approach remain intact through careful construction of the selection probability:

$$\mathbf{1}\{U_t < \rho_t(\State_t) \times \mathsf{Indicator}\} \leq \mathbf{1}\{U_t < \bar{\rho}_t\}$$

This inequality, combined with the Borel-Cantelli lemma, ensures that while the base policy may be selected frequently in regions where it excels, the alternative policy will eventually dominate in regions necessary for goal convergence. This ensures that the combined policy inherits the goal-reaching guarantees of the alternative policy while potentially achieving superior performance.

Our experiments with challenging control tasks, particularly the Hopper environment (Section~\ref{sec:experiments}), demonstrate that this intelligent combination mechanism can achieve performance exceeding either individual policy, confirming the efficacy of leveraging complementary policy strengths.

\begin{algorithm}[t]
	\caption{Multi-CALF (Two-Policy Lyapunov-Style Switching)}
	\label{alg:multicalf}
	\begin{algorithmic}[1]
		
		\REQUIRE\label{lst:requirements} \text{}
		\begin{itemize}[leftmargin=0.4em]
			\item $\basepolicy, \Valuebase$: Primary (``base'') policy and its value estimate
			\item $\altpolicy, \Valuealt$: Alternative policy and its value estimate
			\item $\nu > 0$: Threshold for updating best observed critic values
			\item $\{\rho_{t}(\state)\}_{t \geq 0}$: Softened acceptance probabilities.
			      We assume they are bounded by a summable majorant $\{\bar{\rho}_t\}_{t\ge 0}$, meaning $\rho_t(\state)\le\bar{\rho}_t$ $\forall t,\state$ and $\sum_{t=0}^\infty \bar{\rho}_t<\infty$.
		\end{itemize}
		
		\STATE \textbf{Initialize}: 
		\begin{itemize}[leftmargin=2em]
			\item $\State_0 \sim \transit_0(\bullet)$ or set $\State_0 = \state_0$ for some $\state_0 \in \states$
			\item $\Value_{\text{base}}^\dagger := \Valuebase(\State_0)$ 
			\item $\Value_{\text{alt}}^\dagger := \Valuealt(\State_0)$
		\end{itemize}
		
		\FOR{$t = 0,1,2,\dots$}
		\STATE $\Delta_{\text{base}} := \Valuebase(\State_t) - \Value_{\text{base}}^\dagger$ 
    \STATE $\Delta_{\text{alt}} := \Valuealt(\State_t) - \Value_{\text{alt}}^\dagger$
    
    \STATE $\mathsf{Indicator} := 
    \begin{cases}
      1 & \text{if } \frac{\Delta_{\text{base}}}{\Value_{\text{base}}^\dagger} > \frac{\Delta_{\text{alt}}}{\Value_{\text{alt}}^\dagger} \\
      0 & \text{otherwise}
    \end{cases}$
    
    \STATE Sample random variable $U_t \sim \text{Uniform}[0,1]$ 
    \IF{$U_t < \rho_t(\State_t) \times \mathsf{Indicator}$} 
    \STATE $\Action_t \la$ sampled from $\basepolicy(\bullet \mid \State_t)$
    \ELSE
    \STATE $\Action_t \la$ sampled from $\altpolicy(\bullet \mid \State_t)$
    \ENDIF
	\IF{$\Delta_{\text{base}} \ge \nu$} 
    \STATE $\Value_{\text{base}}^\dagger := \Valuebase(\State_t)$
    \ENDIF
    
    \IF{$\Delta_{\text{alt}} \ge \nu$} 
    \STATE $\Value_{\text{alt}}^\dagger := \Valuealt(\State_t)$
    \ENDIF

		\STATE $\State_{t+1} \la$ sampled from $\transit(\bullet \mid \State_t, \Action_t)$ 
		\ENDFOR
		
	\end{algorithmic}
\end{algorithm}

\subsection{Theoretical Analysis}

We now provide a rigorous theoretical analysis of the Multi-CALF algorithm, proving that it inherits the stability guarantees of the alternative policy. We begin by formally defining the fused policy that results from our approach.

\begin{dfn}
	\label{dfn:calfwpolicy}
	Let $\calfwpolicy \in \policies_{\mathrm{nstat}}$ be the policy defined by \Cref{alg:multicalf}. Specifically, for all $t \in \Z_{\geq0}$, set
	\begin{equation*}
		\calfwpolicy := \begin{cases}
			\basepolicy, \text{ if } U_t < \rho_t(\State_t) \times \mathsf{Indicator} \\
			\altpolicy, \text{ otherwise}
		\end{cases}
	\end{equation*}
	where $\mathsf{Indicator} = 1$ if $\frac{\Delta_{\text{base}}}{\Value_{\text{base}}^\dagger} > \frac{\Delta_{\text{alt}}}{\Value_{\text{alt}}^\dagger}$ and $0$ otherwise.
\end{dfn}

\begin{theorem}[Basic Goal-Reaching]
	\label{thm:etagoalreaching}
	Policy $\calfwpolicy$ satisfies the $\eps$-improbable goal reaching property from Definition~\ref{dfn:eta_improbable}.
\end{theorem}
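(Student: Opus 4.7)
The plan is to show that the base policy is selected only finitely often almost surely, after which the trajectory evolves purely under $\altpolicy$ and inherits its goal-reaching guarantee. The argument has two stages: a Borel--Cantelli step that handles the ``switching'' behavior, and a Markov-property step that transfers $\altpolicy$'s guarantee to $\calfwpolicy$.

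First, I would bound the per-step probability of choosing $\basepolicy$. Let $\mathcal{F}_t$ denote the natural filtration generated by $(\State_0,\Action_0,\dots,\State_t)$ together with $U_0,\dots,U_{t-1}$. Since $\State_t$, $\Value^\dagger_{\text{base}}$, $\Value^\dagger_{\text{alt}}$, and hence $\mathsf{Indicator}$ are all $\mathcal{F}_t$-measurable while $U_t$ is independent of $\mathcal{F}_t$,
\[
\PP{U_t < \rho_t(\State_t)\cdot\mathsf{Indicator}\mid\mathcal{F}_t} \;=\; \rho_t(\State_t)\cdot\mathsf{Indicator} \;\le\; \bar{\rho}_t.
\]
Taking expectations gives the unconditional bound $\bar{\rho}_t$, and the hypothesis $\sum_{t\ge 0}\bar{\rho}_t<\infty$ together with the first Borel--Cantelli lemma implies that the random time
\[
T^{\star} \;:=\; \sup\{\,t\ge 0 : U_t < \rho_t(\State_t)\cdot\mathsf{Indicator}\,\},
\]
with the convention $\sup\emptyset=0$, is finite almost surely; equivalently, there exists a finite (random) step after which every action is drawn from $\altpolicy$.

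Second, on $\{T^{\star}<\infty\}$, the tail $(\State_{T^{\star}+1+k})_{k\ge 0}$ evolves under the Markov dynamics induced by the stationary policy $\altpolicy$ starting from $\State_{T^{\star}+1}$. Since the event $\{\goaldist(\State_t)\to 0\}$ depends only on the tail of the trajectory, it is unaffected by the finite prefix during which $\basepolicy$ may have been used. Applying Definition~\ref{dfn:eta_improbable} to the random initial condition $\State_{T^{\star}+1}$ (formally by conditioning on $\mathcal{F}_{T^{\star}+1}$, or by summing over the countable partition $\{T^{\star}=\tau\}_{\tau\in\Z_{\ge 0}}$) yields
\[
\PP{\goaldist(\State_t)\xrightarrow{t\to\infty}0 \,\Big|\, \mathcal{F}_{T^{\star}+1}} \;\ge\; 1-\eps
\]
on $\{T^{\star}<\infty\}$. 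Taking expectations and using $\PP{T^{\star}<\infty}=1$ gives the claim.

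The main obstacle is the measurability bookkeeping in the second stage: one must argue cleanly that ``after $T^{\star}$ the law of the trajectory is that of $\altpolicy$ from $\State_{T^{\star}+1}$'' despite $T^{\star}$ not being a classical stopping time (it is defined via a $\sup$ over the future). The cleanest way around this is to decompose over $\{T^{\star}=\tau\}$ for each fixed $\tau$, where the set is $\mathcal{F}$-measurable and the shifted process on this event genuinely follows $\altpolicy$ by construction of Algorithm~\ref{alg:multicalf}; then sum the resulting lower bounds weighted by $\PP{T^{\star}=\tau}$. Once this decomposition is in place, the remainder is a direct invocation of Borel--Cantelli and the assumed property of $\altpolicy$.
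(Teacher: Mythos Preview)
Your approach matches the paper's: apply Borel--Cantelli to $\sum_t\bar\rho_t<\infty$ to conclude that $\basepolicy$ is selected only finitely often almost surely, then argue that the tail of the trajectory is governed by $\altpolicy$ and inherits its $\eps$-improbable goal-reaching property. The paper's proof is in fact less careful at the second step than yours---it simply asserts the inheritance without addressing the random-time issue you explicitly flag.

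One caveat on your proposed workaround: decomposing over $\{T^\star=\tau\}$ is not quite clean, because that event includes the constraint $U_t\ge\rho_t(\State_t)\cdot\mathsf{Indicator}$ for all $t>\tau$, and the right-hand side depends on the future states; conditioning on it can therefore bias the post-$\tau$ law away from the unconditional $\altpolicy$-law, so the piecewise $1-\eps$ lower bound need not hold. The easy repair is to replace $T^\star$ by the last time $U_t<\bar\rho_t$, i.e.\ work with the events $E_\tau:=\{U_k\ge\bar\rho_k\text{ for all }k\ge\tau\}$, which depend only on the exogenous uniforms and are independent of the action and transition randomness. On $E_\tau$ the selection is forced to $\altpolicy$ regardless of the state, so the conditional post-$\tau$ law is genuinely that of $\altpolicy$ started at $\State_\tau$, giving $\PP{\{\goaldist(\State_t)\to 0\}\cap E_\tau}\ge(1-\eps)\,\PP{E_\tau}$; since $\PP{E_\tau}\to 1$ as $\tau\to\infty$, the claim follows. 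This is a minor fix to an otherwise sound plan that already goes beyond what the paper writes down.
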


\begin{proof}
	The key insight is that our algorithm will eventually use only the alternative policy, thus inheriting its goal-reaching guarantee. We formalize this by showing that the total number of base-policy acceptances is almost surely finite.
	
	Let $N_{\basepolicy} := \sum_{t=0}^{\infty} \mathbf{1}\{\basepolicy=\calfwpolicy \text{ at time } t\}$ denote the total number of times the base policy is chosen. By Definition~\ref{dfn:calfwpolicy}, the base policy is chosen when $U_t < \rho_t(\State_t) \times \mathsf{Indicator}$. Therefore:
	\begin{equation*}
		N_{\basepolicy} = \sum_{t=0}^{\infty} \mathbf{1}\{U_t < \rho_t(\State_t) \times \mathsf{Indicator}\}
	\end{equation*}
	
	Since $\mathsf{Indicator} \in \{0, 1\}$, we have $\rho_t(\State_t) \times \mathsf{Indicator} \leq \rho_t(\State_t) \leq \bar{\rho}_t$ for all $t$ and $\state$. This implies:
	\begin{equation*}
		\mathbf{1}\{U_t < \rho_t(\State_t) \times \mathsf{Indicator}\} \leq \mathbf{1}\{U_t < \bar{\rho}_t\}
	\end{equation*}
	
	Therefore:
	\begin{equation*}
		N_{\basepolicy} \leq N_{\rho} := \sum_{t=0}^{\infty} \mathbf{1}\{U_t < \bar{\rho}_t\}
	\end{equation*}
	
	Since $\sum_{t=0}^\infty \bar{\rho}_t < \infty$ by assumption, the Borel-Cantelli lemma \cite{billingsley1995probability} ensures that the event $U_t < \bar{\rho}_t$ occurs only finitely many times almost surely. Therefore, $N_{\rho} < \infty$ almost surely, which implies $N_{\basepolicy} < \infty$ almost surely.
	
	This means there exists an almost surely finite time $\mathcal{T}_{\text{final}}$ after which the algorithm exclusively uses the alternative policy:
	\begin{equation}
	\Action_t \sim \altpolicy(\bullet \mid \State_t) = \calfwpolicy(\bullet \mid \State_t) \quad \text{for all } t \geq \mathcal{T}_{\text{final}}
	\end{equation}
	
	Since the trajectory evolves under $\altpolicy$ beyond time $\mathcal{T}_{\text{final}}$, and $\altpolicy$ satisfies the $\eps$-improbable goal-reaching property, we conclude that for all initial states $\state_0 \in \states$:
	\begin{equation}
	\PP{\goaldist(\State_t^{\calfwpolicy}(\state_0)) \xrightarrow{t \to \infty} 0} \geq 1 - \eps
	\end{equation}
	
	Thus, $\calfwpolicy$ inherits the $\eps$-improbable goal-reaching property from $\altpolicy$.
\end{proof}

For the next theorem, we require the notion of \emph{a function with bounded superlevel sets}.
\begin{dfn}
	Let $X$ be a metric space, and let $f \colon X \to \R$.
	We say that $f$ is a \emph{function with bounded superlevel sets} if, for every $a\in f(X)$, the set
	$
	\{\,x \in X : f(x) \ge a\}
	$
	is bounded in $X$.
\end{dfn}
\begin{theorem}
	\label{thm:unform_goal_reaching}
	Consider \Cref{alg:multicalf}, initialized at $\state_0$ with $\goaldist(\state_0) \le d^{\circ}$, 
	where $d^{\circ} \in \R_{>0}$ is arbitrary, and suppose that $\rho_{t}(\state) = 0$ whenever 
	$\Valuebase(\state) < \Valuebase(\state_0)$.
	
	\noindent
	Assume additionally that:
	\begin{enumerate}[label=(\subscript{\mathrm{A}}{{\arabic*}}), leftmargin=2.5em]
		\item\label{ass:valuebase}
		$\Valuebase$ is continuous with bounded superlevel sets.
		\item\label{ass:uniform_alt}
		$\altpolicy$ satisfies the \emph{uniform $\eps$-improbable goal-reaching property}
		with certificate $\beta \in \KL$.
	\end{enumerate}
	
	\noindent
	Then the following claims hold:
	\begin{enumerate}[label=(\subscript{\mathrm{C}}{{\arabic*}}), leftmargin=2.5em]
		
		\item\label{claim:uniform_overshoot_bound}
		\textit{(\(\eps\)-improbable uniform overshoot boundedness)}
		There exists $\delta(d^{\circ})\in\R_{>0}$ such that
		\[
			\PP{\goaldist\!\bigl(\State_t^{\calfwpolicy}(\state_0)\bigr) \le \delta(d^{\circ})\,
				\text{ for all } t \ge 0}
			\ge 1-\eps.
		\]
		
		\item\label{claim:uniform_reaching_time}
		\textit{(\(\eps\)-improbable uniform reaching time)}
		For each $d^{*} \in (0, d^{\circ})$, there is an almost surely finite random time 
		$T(d^{\circ},d^{*})\in\R_{\geq0}$ such that
		\[
			\PP{\goaldist\!\bigl(\State_t^{\calfwpolicy}(\state_0)\bigr) \le d^{*}
			\text{ for all } t \ge T(d^{\circ},d^{*})}
			\ge\ 1-\eps.
		\]
		
		\item\label{claim:distribution_reaching_time}
		\textit{(Reaching time distribution)}
		There exists $\tau_{f}(d^{\circ},d^{*})\in\R_{\geq0}$ 
		such that for all $t \in \mathbb{Z}_{\ge 0}$,
		\[
			\PP{T(d^{\circ},d^{*}) \le t\, \tau_{f}(d^{\circ},d^{*})}
			\!=\!
			\prod_{k=t}^{\infty}\bigl(1 - \bar{\rho}_{k}\bigr).
		\]
		Moreover, $\prod_{k=t}^{\infty}\bigl(1 - \bar{\rho}_{k}\bigr) \ra 1$ as $t \ra \infty$.
	\end{enumerate}
	
	\noindent

\end{theorem}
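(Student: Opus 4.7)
The plan is to decompose a Multi-CALF sample path into an initial segment in which the base policy may still fire and an almost surely infinite tail in which only $\altpolicy$ acts. Theorem~\ref{thm:etagoalreaching} (specifically, the Borel--Cantelli step of its proof) supplies an almost surely finite last base-use time $\mathcal{T}_{\text{final}}$, so the tail is governed entirely by $\altpolicy$ and controlled through the certificate $\beta$ from Assumption~\ref{ass:uniform_alt}. The hypothesis $\rho_{t}(\state)=0$ whenever $\Valuebase(\state)<\Valuebase(\state_0)$ enforces a pathwise invariant: every state at which the base policy is actually selected lies in the superlevel set $L:=\{\state:\Valuebase(\state)\geq\Valuebase(\state_0)\}$, which by Assumption~\ref{ass:valuebase} and finite-dimensionality of $\states$ is compact.

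First I would extract two deterministic bounds from $L$. Continuity of $\goaldist$ on the compact set $L$ gives $M_{1}:=\sup_{\state\in L}\goaldist(\state)<\infty$, controlling goal distance at every base-use time; upper semi-continuity of the transition majorant $\bar p$ on the compact set $L\times\actions$ yields a second bound $M_{2}<\infty$ on $\goaldist$ one step after any base step. For claim~\ref{claim:uniform_overshoot_bound}, I would then feed these into Assumption~\ref{ass:uniform_alt}: every maximal alt-segment starts either at $\state_{0}$ (with $\goaldist\leq d^{\circ}$) or at an immediate successor of some state in $L$ (with $\goaldist\leq M_{2}$), so on the $\beta$-event of $\altpolicy$ the within-segment goal distance is bounded by $\beta(\max\{d^{\circ},M_{2}\},0)$. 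Taking $\delta(d^{\circ}):=\max\{M_{1},\beta(\max\{d^{\circ},M_{2}\},0)\}$ then yields the overshoot bound with probability at least $1-\eps$. A small auxiliary step is needed to make $M_{1}$ and $M_{2}$ uniform in $\state_{0}$ over the sublevel set $\{\goaldist\leq d^{\circ}\}$, which follows from compactness of that set and continuity of $\Valuebase$.

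For claim~\ref{claim:uniform_reaching_time}, since $\beta\in\KL$ there is, for each $d^{*}\in(0,d^{\circ})$, a deterministic lead time $\tau(d^{\circ},d^{*})$ with $\beta(\max\{d^{\circ},M_{2}\},\tau)\leq d^{*}$. Setting $T(d^{\circ},d^{*}):=\mathcal{T}_{\text{final}}+1+\tau(d^{\circ},d^{*})$ produces an almost surely finite random time, and uniform alt goal-reaching applied to the tail starting at $\State_{\mathcal{T}_{\text{final}}+1}$ then delivers $\goaldist\leq d^{*}$ for every $t\geq T$ with probability at least $1-\eps$. For claim~\ref{claim:distribution_reaching_time}, I would identify $\tau_{f}(d^{\circ},d^{*})$ with a suitable multiple of $\tau$ and reduce $\PP{T\leq t\,\tau_{f}}$ to a bound on $\mathcal{T}_{\text{final}}$; the coupling inequality $\mathbf{1}\{U_{k}<\rho_{k}(\State_{k})\cdot\mathsf{Indicator}\}\leq\mathbf{1}\{U_{k}<\bar\rho_{k}\}$ used in Theorem~\ref{thm:etagoalreaching}, together with independence of the $\{U_{k}\}$, yields the lower bound $\prod_{k\geq t}(1-\bar\rho_{k})$, and the summability $\sum_{k}\bar\rho_{k}<\infty$ forces this product to $1$ as $t\to\infty$.

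The main obstacle I expect is preserving the probability budget $1-\eps$ in the presence of several alt-restarts prior to $\mathcal{T}_{\text{final}}$, since Assumption~\ref{ass:uniform_alt} is stated per-initial-state and a naive union bound over the random number of restart points would inflate the failure probability. The route I intend is to work with a single coupled realization of the exogenous noise driving $\altpolicy$ on which the $\beta$-certificate holds simultaneously from every restart state encountered along the Multi-CALF trajectory; equivalently, to condition on the $\sigma$-algebra generated up to $\mathcal{T}_{\text{final}}+1$ and spend the $\eps$-budget exactly once on the tail, while the earlier alt-segments inherit their $\beta$-bounds from the same underlying good event.
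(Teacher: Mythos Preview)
Your overall architecture matches the paper's: exploit the hypothesis $\rho_t(\state)=0$ for $\Valuebase(\state)<\Valuebase(\state_0)$ to confine every base-trigger state to a compact superlevel set, push this through the transition bound $\bar p$ to control the state at the start of each alt-segment, and then invoke the $\beta$-certificate. Your $M_1,M_2$ correspond to the paper's $d_{\bar p}(d^{\circ})$ and $d_{\max}(d^{\circ})$, and your $\delta(d^{\circ})$ differs only cosmetically from the paper's $\beta(d_{\max}(d^{\circ}),0)$.

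The one genuine gap is in Claim~\ref{claim:distribution_reaching_time}. The claim asks for an \emph{equality} $\PP{T\le t\,\tau_f}=\prod_{k\ge t}(1-\bar\rho_k)$, but your construction $T=\mathcal{T}_{\text{final}}+1+\tau$ ties $T$ to the \emph{actual} last base-use time, whose law depends on the state process and on $\mathsf{Indicator}$ in a way you cannot compute; the coupling $\mathbf{1}\{U_k<\rho_k(\State_k)\cdot\mathsf{Indicator}\}\le\mathbf{1}\{U_k<\bar\rho_k\}$ then yields only the lower bound you state, not the required equality. The paper sidesteps this by defining $T$ not through $\mathcal{T}_{\text{final}}$ but through the majorant-only variable $\mathcal{T}_{\bar\rho}$, essentially the first time after which $U_k\ge\bar\rho_k$ for all $k\ge t$; this depends only on the independent uniforms $\{U_k\}$ and the deterministic sequence $\{\bar\rho_k\}$, so its distribution is exactly $\prod_{k\ge t}(1-\bar\rho_k)$. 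Because $\mathcal{T}_{\text{final}}\le\mathcal{T}_{\bar\rho}$ pathwise, $\mathcal{T}_{\bar\rho}$ is still a valid upper bound on the switching time, and setting $T:=\mathcal{T}_{\bar\rho}\cdot\tau_f$ then gives both a legitimate reaching time for \ref{claim:uniform_reaching_time} and the exact distribution for \ref{claim:distribution_reaching_time}. Your additive form $\mathcal{T}_{\text{final}}+1+\tau$ would need to be replaced by this majorant-based construction.

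Your final paragraph's concern about the $\eps$-budget across multiple alt-restarts is well taken; the paper's own verification of \ref{claim:uniform_overshoot_bound} and \ref{claim:uniform_reaching_time} is terse on exactly this point and effectively spends the $\eps$ only once on the tail after $\mathcal{T}_{\bar\rho}$, so your instinct to condition on the pre-switch $\sigma$-algebra and apply \ref{ass:uniform_alt} a single time to the tail is precisely the right move.
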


\begin{proof}
	The proof proceeds in three steps:
	\begin{enumerate}[label=(\roman*)]
		\item We establish spatial bounds on the system trajectory.
		\item We characterize the almost surely finite switching time.
		\item We verify the three claims of the theorem.
	\end{enumerate}
	
	\textit{Step (i): Spatial bounds.}
	We define several quantities that capture the spatial constraints of our system:
	\begin{align}
		v_{\min}(d^{\circ}) &:= \min\bigl\{\Valuebase(\state) : \goaldist(\state) \le d^{\circ}\bigr\} \label{eq:vmin}\\
		\mathbb{V}^{\basepolicy}(d^{\circ}) &:= \{\state \in \states : \Valuebase(\state) \ge v_{\min}(d^{\circ})\} \label{eq:Vbase}\\
		d_{\bar{\transit}}(d^{\circ}) &:= \sup\{\bar{\transit}(\state, \action) : \state \in \mathbb{V}^{\basepolicy}(d^{\circ}),\,\action\in\actions\} \label{eq:Rrho}\\
		d_{\max}(d^{\circ}) &:= \max\bigl(d^{\circ},\, d_{\bar{\transit}}(d^{\circ})\bigr) \label{eq:R}\\
		\delta(d^{\circ}) &:= \beta(d_{\max}(d^{\circ}), 0) \label{eq:epsH}
	\end{align}
	
	By assumption, the base policy can be triggered only when $\Valuebase(\state) \ge v_{\min}(d^{\circ})$. 
	Indeed, if $\Valuebase(\state) < v_{\min}(d^{\circ}) \le \Valuebase(\state_0)$, then $\state \notin \mathbb{V}^{\basepolicy}(d^{\circ})$, thus $\rho_t(\state) = 0$ by assumption, and the base policy is not triggered.

	The set $\mathbb{V}^{\basepolicy}(d^{\circ})$ is closed by continuity of $\Valuebase$ and bounded by assumption~\ref{ass:valuebase}, making it compact. This ensures $d_{\bar{\transit}}(d^{\circ})$ is well-defined since $\bar{\transit}$ is upper semicontinuous. 
	Note that when the base policy is triggered at time $t$, the next state almost surely satisfies $\|\State_{t+1}\| \le d_{\bar{\transit}}(d^{\circ})$ by definition. 

	Therefore, the system state norm when \textit{beginning} to follow the alternative policy (either from the initial state or after base policy action) is bounded by $d_{\max}(d^{\circ})$.
	
	\textit{Step (ii): Characterizing the switching time.}
	We now define the minimum time steps $\tau_{f}(d^{\circ},d^{*})$ required for the alternative policy to drive any state with $\goaldist(\state) \leq d_{\max}(d^{\circ})$ to within $d^{*}$ of the goal:
	\begin{align}
	\tau_{f}(d^{\circ},d^{*})
	&:= 	
	\max\Bigl\{
	1,\,
	\Bigl\lceil -\log\Bigl(\xi^{-1}\left(\tfrac{d^{*}}{\kappa(d_{\max}(d^{\circ}))}\right)\Bigr)\Bigr\rceil
	\Bigr\}
	\label{eq:Tfallback}
	\end{align}
	where $\kappa, \xi \in \K_{\infty}$ are functions such that $\beta(d,t) \leq \kappa(d)\xi(e^{-t})$ for all $d \geq 0, t \geq 0$ (a standard decomposition of $\mathcal{KL}$ functions, see \cite[Lemma 8]{Sontag1998Comments}).
	
	We introduce the random variable $\mathcal{T}_{\bar{\rho}}$ as the first time $t$ for which $U_t \ge \bar{\rho}_t$:
	\(\mathcal{T}_{\bar{\rho}} := \min\bigl\{t \in \mathbb{Z}_{\ge 0} : U_t \ge \bar{\rho}_t\bigr\}.\)
	
	This random variable is almost surely finite by the Borel-Cantelli lemma. Since 
	$\sum_{t=0}^\infty \PP{U_t < \bar{\rho}_t} = \sum_{t=0}^\infty \bar{\rho}_t < \infty$ 
	(by the summability requirement in Algorithm~\ref{alg:multicalf}), the event $U_t < \bar{\rho}_t$ occurs only finitely many times almost surely, thus $\mathcal{T}_{\bar{\rho}}$ is almost surely finite.

	We now define the random variable $T(d^{\circ},d^{*})$ as:
	\[T(d^{\circ},d^{*}) = \mathcal{T}_{\bar{\rho}} \cdot \tau_{f}(d^{\circ},d^{*}).\]
	
	This variable is almost surely finite and follows the distribution from Claim~\ref{claim:distribution_reaching_time}. Indeed, the event $\{\mathcal{T}_{\bar{\rho}} \le t\}$ is equivalent to $\{U_k \ge \bar{\rho}_k \text{ for all } k \ge t\}$, with probability:
	\[
	\PP{\mathcal{T}_{\bar{\rho}} \le t} = \PP{\bigcap_{k = t}^{\infty}\{U_k \ge \bar{\rho}_k\}} = \prod_{k = t}^{\infty}\bigl(1 - \bar{\rho}_k\bigr).
	\]
	
	The product $\prod_{k = t}^{\infty}\bigl(1 - \bar{\rho}_k\bigr)$ converges to 1 as $t \to \infty$ because $\sum_{k=0}^\infty \bar{\rho}_k < \infty$.
	
	\textit{Step (iii): Verification of the claims.}
	We now verify each claim of the theorem:
	
	For Claim~\ref{claim:uniform_overshoot_bound}, we observe that when the alternative policy is applied, then by assumption~\ref{ass:uniform_alt}, the state's distance to the goal is bounded by $\beta(d_{\max}(d^{\circ}), 0) = \delta(d^{\circ})$ at all times with probability at least $1-\eps$. This follows directly from the uniform $\eps$-improbable goal-reaching property of $\altpolicy$ and our bound on the initial state norm.
	
	For Claim~\ref{claim:uniform_reaching_time}, we note that $T(d^{\circ},d^{*})$ is our desired reaching time. When switching to the alternative policy (either from $\state_0$ or after the base policy action), the state norm is bounded by $d_{\max}(d^{\circ})$. By the definition of $\tau_{f}(d^{\circ},d^{*})$, the alternative policy will drive any such state to within $d^{*}$ of the goal within $\tau_{f}(d^{\circ},d^{*})$ time steps, with probability at least $1-\eps$.
	
	Claim~\ref{claim:distribution_reaching_time} follows directly from our derivation of the distribution of $\mathcal{T}_{\bar{\rho}}$ and the definition of $T(d^{\circ},d^{*})$.
\end{proof}

\begin{corollary}
	If we set $\bar{\rho}_t = \lambda^t p_{\text{relax}}$, with $\lambda \in (0,1)$ and $p_{\text{relax}} \in [0,1]$,
	then for all $t \ge 1$ we have
	\begin{multline}
		\label{eq:bound_prob}
	\PP{T(d^{\circ},d^{*}) \le  t\tau_{f}(d^{\circ},d^{*})} \geq \\ 
	\exp\left( - \tfrac{\lambda^t p_{\text{relax}}}{(1 - \lambda)(1 - \lambda^t p_{\text{relax}})}\right).
	\end{multline}
\end{corollary}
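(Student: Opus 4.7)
The plan is to start from the exact probability formula furnished by Claim \ref{claim:distribution_reaching_time}, namely
\[
\PP{T(d^{\circ},d^{*}) \le t\,\tau_{f}(d^{\circ},d^{*})} \;=\; \prod_{k=t}^{\infty}\bigl(1 - \bar{\rho}_k\bigr),
\]
and simply lower-bound the infinite product under the specific choice $\bar{\rho}_k = \lambda^k p_{\text{relax}}$. Since $\lambda \in (0,1)$ and $p_{\text{relax}} \in [0,1]$, all factors lie in $(0,1]$, so taking logarithms is safe and the product converges.

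The first step would be to pass to the logarithm, $\log\prod_{k=t}^{\infty}(1-\lambda^k p_{\text{relax}}) = \sum_{k=t}^{\infty}\log(1-\lambda^k p_{\text{relax}})$, and apply the elementary inequality $\log(1-x) \ge -\tfrac{x}{1-x}$ valid for $x\in[0,1)$. This yields
\[
\sum_{k=t}^{\infty}\log(1-\lambda^k p_{\text{relax}}) \;\ge\; -\sum_{k=t}^{\infty}\frac{\lambda^k p_{\text{relax}}}{1-\lambda^k p_{\text{relax}}}.
\]
Next I would replace the $k$-dependent denominator by its worst-case (largest) value over $k \ge t$: since $\lambda^k p_{\text{relax}}$ is decreasing in $k$, we have $1-\lambda^k p_{\text{relax}} \ge 1-\lambda^t p_{\text{relax}}$, giving
\[
-\sum_{k=t}^{\infty}\frac{\lambda^k p_{\text{relax}}}{1-\lambda^k p_{\text{relax}}} \;\ge\; -\frac{1}{1-\lambda^t p_{\text{relax}}}\sum_{k=t}^{\infty}\lambda^k p_{\text{relax}}.
\]

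The remaining geometric sum evaluates to $\lambda^t p_{\text{relax}}/(1-\lambda)$, after which exponentiating both sides produces exactly the claimed bound \eqref{eq:bound_prob}. There is no real obstacle here; the only delicate point is picking the logarithmic inequality $\log(1-x) \ge -x/(1-x)$ rather than the cruder $\log(1-x) \ge -x/(1-x_{\max})$ or $\log(1-x) \ge -2x$, because only the chosen form produces the precise denominator $(1-\lambda)(1-\lambda^t p_{\text{relax}})$ requested in the statement. Finally, as a sanity check one observes that the right-hand side tends to $1$ as $t\to\infty$, consistently with the convergence claimed in Theorem \ref{thm:unform_goal_reaching}.
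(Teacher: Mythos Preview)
Your proposal is correct and follows essentially the same route as the paper: invoke Claim~\ref{claim:distribution_reaching_time}, take logarithms, apply $\log(1-x)\ge -x/(1-x)$, replace the denominator $1-\lambda^k p_{\text{relax}}$ by $1-\lambda^t p_{\text{relax}}$, and sum the resulting geometric series. The paper additionally sketches a one-line justification of the logarithmic inequality (zero at $x=0$ and monotone), but otherwise the arguments coincide step for step.
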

\begin{proof}
	By Claim~\ref{claim:distribution_reaching_time}, the probability in \eqref{eq:bound_prob} is
	\[
	\prod_{k = t}^{\infty}\bigl(1 - \lambda^k\,p_{\text{relax}}\bigr)
	\;=\;
	\exp\left(\sum_{k = t}^{\infty}\log\bigl(1 - \lambda^k\,p_{\text{relax}}\bigr)\right).
	\]
	Note that $\log(1 - x)\ge\tfrac{-x}{1 - x}$ for $x \in [0,1)$,
	since the function $\log(1 - x)+\tfrac{x}{1 - x}$ is zero at $x=0$ and has nonpositive derivative on $[0,1)$. Thus, for $t\ge1$,
	\begin{multline*}
	\sum_{k = t}^{\infty}\log(1 - \lambda^kp_{\text{relax}}) \ge \sum_{k = t}^{\infty} \tfrac{-\lambda^kp_{\text{relax}}}{1 - \lambda^kp_{\text{relax}}} \ge - \sum_{k = t}^{\infty} \tfrac{\lambda^kp_{\text{relax}}}{1 - \lambda^t p_{\text{relax}}} = \\ 
	- \tfrac{\lambda^t p_{\text{relax}}}{(1 - \lambda)(1 - \lambda^t p_{\text{relax}})},
	\end{multline*}
	which completes the proof.
\end{proof}
\section{EXPERIMENTS}
\label{sec:experiments}

To validate our theoretical results and demonstrate the practical benefits of policy combination, we conducted an experiment using the Hopper environment from the Gymnasium library \cite{towers2024gymnasium}—a challenging continuous control task where a two-dimensional one-legged robot must hop forward as fast as possible without falling.

\subsection{Experimental Setup}

We trained two reinforcement learning algorithms with complementary characteristics:
\begin{itemize}
    \item \textbf{TD3} \cite{Fujimoto2018}: An off-policy actor-critic algorithm known for sample efficiency but potentially lower asymptotic performance
    \item \textbf{PPO} \cite{Schulman2017ProximalPolicy}: An on-policy algorithm with strong performance but typically requiring more training samples
\end{itemize}

Both algorithms were trained for a fixed number of environment steps, after which PPO outperformed TD3 to some degree in terms of average episode return. We then extracted the trained policy and critic networks from both algorithms' final checkpoints.

\subsection{Policy Fusion via Multi-CALF}

We applied our Multi-CALF algorithm to combine these policies, with:
\begin{itemize}
    \item TD3's policy serving as the base policy $\basepolicy$
    \item PPO's policy serving as the alternative policy $\altpolicy$ with stability guarantees
    \item The respective critic networks providing value estimates $\Valuebase$ and $\Valuealt$
\end{itemize}

The Multi-CALF implementation followed Algorithm~\ref{alg:multicalf}, with parameters $\lambda = 0.99$ and $p_{\text{relax}} = 0.8$ to balance performance and stability.

\subsection{Results and Analysis}

We evaluated the performance of all three policies (TD3, PPO, and the Multi-CALF combination) across multiple random seeds to ensure statistical significance. Each policy was evaluated for 1000 steps per episode, and we measured the average episode return.

\begin{table}[h]
\centering
\begin{tabular}{lcc}
\hline
\textbf{Policy} & \textbf{Mean Episode Return} & \textbf{Training Steps} \\
\hline
TD3 & 1632 ± 6.2 & 999,424 \\
PPO & 1684 ± 7.1 & 1,001,472 \\
Multi-CALF & \textbf{1719 ± 54} & — \\
\hline
\end{tabular}
\caption{Performance comparison on Hopper environment (mean ± std)}
\label{tab:results}
\end{table}

The results, summarized in Table~\ref{tab:results}, demonstrate that our Multi-CALF approach successfully combined the strengths of both constituent policies to achieve superior performance. The combined policy outperformed both individual policies in terms of mean episode return, showing a 2.1\% improvement over PPO and 5.3\% over TD3, despite both constituent policies being trained for approximately 1 million timesteps.

While the Multi-CALF policy exhibited higher variance compared to the individual policies, this is expected as the combined approach dynamically selects between policies based on state-dependent criteria, creating more diverse behavioral patterns. The increased variance represents the trade-off between higher average performance and consistency, though the overall performance gain validates the effectiveness of our approach.

This performance improvement confirms that intelligently selecting between policies based on their relative value improvements can create an emergent policy that exceeds the capabilities of its components. The Multi-CALF framework leveraged TD3's strengths in certain state regions while benefiting from PPO's advantages in others, resulting in a policy with enhanced overall capabilities.

\section{CONCLUSION}

We have presented Multi-CALF, a novel approach for combining reinforcement learning policies that leverages their complementary strengths across different regions of the state space while maintaining formal stability guarantees. Our approach recognizes that policies often demonstrate varying levels of effectiveness throughout the state space and capitalizes on these differences to create a superior combined policy.

Our key contributions include:
\begin{itemize}
    \item A statistical mechanism for policy selection based on relative value improvements, enabling fair comparison between policies operating at different scales
    \item A formal proof that the combined policy inherits the goal-reaching guarantees of the alternative policy while potentially improving performance
    \item Theoretical bounds on convergence time and maximum state deviation from the goal set
    \item Empirical validation showing that the combined policy can outperform its constituent policies on challenging control tasks
\end{itemize}

The central insight of our work is that intelligently combining policies with complementary strengths can yield an emergent policy that exceeds the capabilities of its components. By dynamically allocating control authority to the policy demonstrating the strongest relative advantage in each region, we effectively partition the state space to maximize overall performance. Simultaneously, our mathematical construction ensures that this performance enhancement never comes at the expense of stability guarantees.

Future work will explore extending this framework to combine more than two policies and developing adaptive methods for identifying complementary policies that maximize the benefits of combination for specific domains.
\bibliographystyle{IEEEtran}
\bibliography{
	bib/CALFW.bib
}

\end{document}